\documentclass{article}

 \usepackage[preprint]{neurips_2026}

\usepackage[utf8]{inputenc}
\usepackage[T1]{fontenc}
\usepackage{amsmath,amssymb,amsfonts}
\usepackage{booktabs}
\usepackage{graphicx}
\usepackage{hyperref}
\usepackage{amsmath}
\usepackage{amssymb}
\usepackage{amsthm}

\newtheorem{theorem}{Theorem}
\newtheorem{proposition}{Proposition}
\usepackage{url}
\usepackage{multirow}
\usepackage{xcolor}
\usepackage{subcaption}
\usepackage{enumitem}
\usepackage{bm}
\usepackage{nicefrac}
\usepackage{microtype}
\usepackage[capitalize,noabbrev]{cleveref}

\newcommand{\method}{DeCoDrift}

\title{DeCoDrift: Stabilizing Decoder Coupling in Closed-Loop Foundation Segmentation}

\author{
  H.\,M.~Shadman Tabib\textsuperscript{$\dagger$} \quad
  Md.~Shamsuzzoha Bayzid\textsuperscript{$\dagger$} \quad
  M~Sohel Rahman\textsuperscript{$\dagger$} \\[4pt]
  \textsuperscript{$\dagger$}Department of Computer Science and Engineering \\
  Bangladesh University of Engineering and Technology, Dhaka, Bangladesh \\
}

\begin{document}
\maketitle

\begin{abstract}
Foundation segmentation models such as Segment Anything Model (SAM) are now routinely used in iterative pipelines, where each predicted mask is fed back as the next prompt. This practice turns segmentation into a closed-loop dynamical process, yet the decoder-level behavior of these systems remains largely unexamined. We show that this feedback loop can induce a previously overlooked failure mode, decoder coupling drift, in which the mask decoder’s cross-attention progressively loses alignment with the target object, causing errors to accumulate across iterations. We study this phenomenon by instrumenting SAM’s mask decoder and deriving ground-truth-free measures of prompt-image coupling, attention stability, and temporal consistency. On volumetric electron microscopy data, these decoder-internal signals reveal that standard iterative prompting systematically degrades attention alignment and temporal coherence relative to oracle-anchored feedback. We then formalize iterative prompting as a discrete-time dynamical system and show how proximal anchoring reduces error amplification in the feedback loop. Building on this analysis, we introduce DeCoDrift, a training-free inference-time stabilization framework that constrains prompt updates and preserves decoder coupling across iterations.Across extensive experiments, DeCoDrift consistently improves attention stability, temporal coherence, and segmentation quality over standard iterative prompting, without retraining or ground-truth supervision. More broadly, our results show that decoder-internal dynamics are not merely diagnostic: they provide actionable signals for stabilizing foundation segmentation models in closed-loop use.
\end{abstract}

\section{Introduction}
\label{sec:intro}

Promptable foundation models for image segmentation~\cite{kirillov2023sam,ravi2024sam2,ke2024hqsam} have transformed how practitioners approach segmentation tasks. Given point or box prompts, SAM's lightweight mask decoder---a two-layer transformer~\cite{vaswani2017attention} conditioned on prompt tokens and image features from a Vision Transformer (ViT) encoder~\cite{dosovitskiy2021image}---produces high-quality masks in a single forward pass. This zero-shot capability has been rapidly adopted from natural-image editing~\cite{yang2023track} to medical imaging and digital pathology~\cite{ma2024medsam,mazurowski2023sam,deng2023segment} and to connectomics-style volumetric biological analysis~\cite{funke2019large,wei2020mitoem,lee2017superhuman}.

A natural extension is to apply SAM \emph{iteratively}: extract prompts from the predicted mask of one pass and feed them into the next, progressively refining the segmentation or propagating it across a volume. This is standard in interactive segmentation~\cite{sofiiuk2022reviving}, video object segmentation~\cite{oh2019video,cheng2022xmem}, and volumetric biological analysis~\cite{funke2019large,lee2017superhuman}. The iterative scheme creates a closed-loop feedback system:
\begin{equation}
  \label{eq:feedback-loop}
  M_t = f_\theta(I_t, p_t), \qquad p_{t+1} = g(M_t),
\end{equation}
where $f_\theta$ is the SAM decoder, $g$ extracts prompts (centroids, bounding boxes) from the predicted mask, and $\Phi_\theta = g \circ f_\theta$ defines a discrete-time dynamical system. Errors compound: small inaccuracies in $M_t$ produce shifted prompts $p_{t+1}$, which alter the decoder's cross-attention and further degrade the mask.

\textbf{The central question:} \emph{What happens inside SAM's decoder during this iterative drift, and can we intervene at inference time without retraining?} Prior work on SAM robustness has focused on prompt sensitivity and prompt quality~\cite{mazurowski2023sam}, adaptation and personalization~\cite{chen2024sam,wu2023medical,zhang2023personalize}, and corruption robustness~\cite{huang2024robustness}---all studying the \emph{input} side. We instead study the decoder's \emph{internal} attention dynamics under iterated feedback, revealing a systematic failure mode invisible from the output alone.

\paragraph{Contributions.}
\begin{enumerate}[leftmargin=*,itemsep=1pt,topsep=2pt]
  \item \textbf{Decoder coupling framework.} We instrument SAM's mask decoder at four attention sites and define ten metrics that quantify coupling between decoder attention and the target without ground truth (\cref{sec:metrics}).
  \item \textbf{Empirical characterization.} On 133{,}452 paired decoder calls across 10 volumetric slices, we show iterative prompting suffers from systematic attention drift, elevated prompt energy, and degraded temporal stability relative to an oracle-anchored upper bound (\cref{sec:experiments}).
  \item \textbf{Dynamical analysis.} We formalize the prompt-feedback loop and prove proximal anchor regularization bounds the effective Jacobian (\cref{sec:theory}).
  \item \textbf{Training-free stabilization.} \method{} closes up to 76.4\% of the coupling gap to an oracle and improves end-to-end IoU by 37.7\% relative (\cref{sec:method,sec:experiments}).
\end{enumerate}

\begin{figure}[t]
  \centering
  \includegraphics[width=\textwidth]{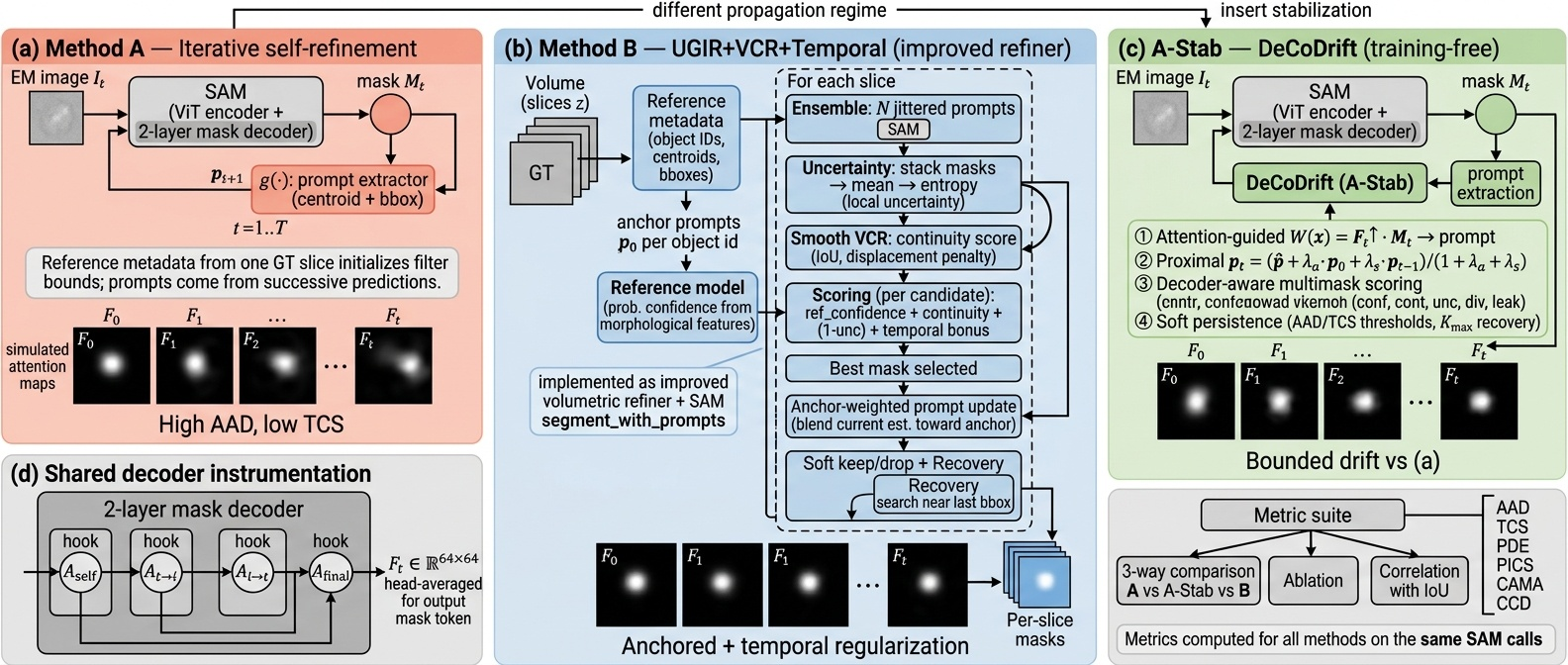}
  \caption{\textbf{Overview.} (a)~Method~A iteratively feeds predicted masks back as prompts (high AAD, low TCS). (b)~Method~B (oracle) uses ground-truth-derived anchor prompts. (c)~\method{} (A-Stab) inserts a training-free stabilization module bounding drift via proximal regularization. (d)~All methods share decoder instrumentation that hooks four attention sites and records per-call coupling metrics.}
  \label{fig:overview}
\end{figure}

\section{Related Work}
\label{sec:related}

\paragraph{Foundation segmentation.} SAM~\cite{kirillov2023sam} introduced prompt-based segmentation; SAM~2~\cite{ravi2024sam2} extends to video with memory-conditioned decoding, and HQ-SAM~\cite{ke2024hqsam} improves boundary quality. These models target single-image inference; the iterative prompt-feedback setting we study has no decoder-level analysis to date. In biomedicine, MedSAM~\cite{ma2024medsam} fine-tunes on >1M medical pairs; recent foundation-model work in cryo-electron tomography~\cite{uddin2025cryoet} also uses diffusion priors for unsupervised cellular segmentation. \cite{mazurowski2023sam} found SAM is sensitive to prompt quality, motivating our study. In connectomics, iterative propagation is standard~\cite{funke2019large,wei2020mitoem,lee2017superhuman,arganda2015crowdsourcing}.

\paragraph{Interactive/iterative segmentation and attention analysis.} Interactive methods~\cite{sofiiuk2022reviving,cheng2022mask2former} and video object segmentation~\cite{oh2019video,cheng2022xmem} both involve feedback loops similar to \cref{eq:feedback-loop}; SAM~2~\cite{ravi2024sam2} extends SAM with an explicit memory bank to combat drift in video. Existing analyses focus on \emph{output} quality, leaving the decoder's internal dynamics unexamined. Track-Anything~\cite{yang2023track} composes SAM with off-the-shelf trackers; \method{} is complementary to such memory- or tracker-based propagation: it stabilizes the per-decoder-call coupling that they assume as a black box. Attention rollout~\cite{abnar2020quantifying}, gradient-weighted attention~\cite{chefer2021transformer}, and probing~\cite{clark2019what} have interpreted encoder transformers; Mask2Former~\cite{cheng2022mask2former} showed masked cross-attention helps focus, but did not study attention under iterated prompting. We give the first systematic attention analysis in a \emph{decoder} transformer under closed-loop feedback.

\paragraph{Dynamical systems and test-time methods.} Neural ODEs~\cite{chen2018neural} and stability analyses~\cite{miller2018stable,hardt2016train} cast deep models as dynamical systems; proximal operators are well-studied~\cite{parikh2014proximal}. Test-time training~\cite{sun2020test} and prompt tuning~\cite{jia2022visual} adapt models at inference. Our approach modifies neither weights nor learned representations, operating on the geometry of the prompt-feedback loop.

\section{Decoder Coupling Metrics}
\label{sec:metrics}

We instrument SAM's two-layer mask decoder at four attention sites: (i)~final self-attention $\mathbf{A}_{\mathrm{self}} \in \mathbb{R}^{B \times H \times Q \times Q}$; (ii)~token$\to$image cross-attention $\mathbf{A}_{t \to i}$; (iii)~image$\to$token cross-attention $\mathbf{A}_{i \to t}$; and (iv)~final token$\to$image attention $\mathbf{A}_{\mathrm{final}}$. We cache prompt embeddings, image embeddings, and the output mask token. From the head-averaged attention map for the mask token, $F_t \in \mathbb{R}^{H_f \times W_f}$, and the predicted (or GT) mask $G_t$, we define:

\paragraph{Cross-Attention Mask Alignment (CAMA).} Spatial agreement between attention and target:
\begin{equation}
  \mathrm{CAMA\text{-}Dice}(F_t, G_t) = \frac{2 \sum_x F_t(x) G_t(x)}{\sum_x F_t(x) + \sum_x G_t(x)}, \quad \mathrm{CAMA\text{-}IoU}\text{ analogously}.
\end{equation}

\paragraph{Attention Anchor Drift (AAD).} $\mathrm{AAD}(F_t, F_0) = \mathrm{JS}(F_t \| F_0)$, where $F_0$ is the centroid-aligned first-call attention.

\paragraph{Temporal Coupling Stability (TCS).} $\mathrm{TCS}(F_t, F_{t-1}) = 1 - \mathrm{JS}(F_t \| F_{t-1})$.

\paragraph{Prompt Drift Energy (PDE).} $\mathrm{PDE} = \|c_t - c_0\|^2/\mathrm{diag}^2 + \lambda_b(1 - \mathrm{IoU}(b_t, b_0))$, with centroid $c_t$ and bbox $b_t$.

\paragraph{Distractibility Leakage Ratio (DLR).} $\mathrm{DLR} = \min(\sum_x F_t(1{-}G_t)/(\sum_x F_tG_t + \epsilon), \tau_{\mathrm{clamp}})$ with $\tau_{\mathrm{clamp}}{=}10^4$; we report $\log\mathrm{DLR}$.

\paragraph{Causal Coupling Drop (CCD).} Ablate the top-$k$ attended image tokens and re-run; $\mathrm{CCD} = \max(0, \mathrm{IoU}_{\mathrm{base}} - \mathrm{IoU}_{\mathrm{ablated}})$. High CCD indicates \emph{causal} (not merely correlational) dependence on attended regions.

\paragraph{Composite PICS.} $\mathrm{PICS} = \mathrm{CAMA}_{\mathrm{Dice}} \cdot \mathrm{TCS} \cdot (1 - \mathrm{AAD}) \cdot (1 - \mathrm{DLR}/\tau_{\mathrm{clamp}})$. We additionally report attention entropy (AE) and the Spearman self--cross correlation (SCA).

\paragraph{Ground-truth dependence and test-time use.} AAD, TCS, PDE, AE, and SCA are computed entirely from cached prompts and attention maps and require \emph{no} ground truth---these are the deployable monitoring signals. CAMA, DLR, CCD, and PICS use $G_t$, which is the GT mask in our reported experiments and the predicted mask downsampled to $H_f \times W_f$ when GT is unavailable; in that GT-free regime the metrics measure attention--prediction agreement (a self-consistency proxy) rather than attention--target agreement. CCD ablates the top-$k{=}5\%$ of image tokens by head-averaged $\mathbf{A}_{\mathrm{final}}$, replacing them with the per-token mean (rather than zero) to avoid distribution shift, then re-runs the decoder---a controlled counterfactual rather than a destructive zeroing.
For an expanded explanation of each metric, including intuition, implementation details, edge cases, aggregation, and deployment-time interpretation, see \cref{app:metric-explanation}.

\section{Dynamical Systems Analysis}
\label{sec:theory}

The iterative pipeline defines a system on the prompt space $\mathcal{P} \subset \mathbb{R}^6$ (2D centroid + bbox corners): $p_{t+1} = \Phi_\theta(I_t, p_t)$. Defining the prompt error $e_t = p_t - p_t^*$ relative to the GT-derived optimal prompt and linearizing,
\begin{equation}
  e_{t+1} \approx J_t \, e_t + \eta_t, \qquad J_t = \partial \Phi_\theta / \partial p \big|_{p_t}.
\end{equation}

\begin{theorem}[Stability]
\label{thm:stability}
The feedback loop is asymptotically stable if $\rho(J_t) < 1$ for all $t$. When $\rho(J_t) \geq 1$, errors grow geometrically as $\rho(J_t)^t$, producing the drift we measure via AAD and PDE.
\end{theorem}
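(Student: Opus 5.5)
The plan is to work directly with the linearized recursion $e_{t+1} = J_t e_t + \eta_t$, first with $\eta_t \equiv 0$, and then treat the noise as a bounded perturbation. The statement is loose in two places, and I will make it precise before proving anything.

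First, for a time-varying sequence, pointwise $\rho(J_t)<1$ does not by itself imply stability, since products of matrices each with spectral radius below one can grow. I will therefore prove the sufficient version of the claim under one of two standard strengthenings:
\begin{itemize}
\item a uniform operator-norm bound $\sup_t \|J_t\| \le q < 1$ in some fixed induced norm; or
\item a uniform spectral bound $\sup_t \rho(J_t) \le \bar\rho < 1$ together with slow variation, $\|J_{t+1}-J_t\|\le\delta$ with $\delta$ small, or a common quadratic Lyapunov function $V(e)=e^\top P e$ with $J_t^\top P J_t \preceq \gamma^2 P$ for some $\gamma<1$.
\end{itemize}
In the time-invariant case $J_t\equiv J$ the statement holds as written.

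\textbf{Stable direction.} I will proceed in three steps.
\begin{enumerate}
\item For constant $J$, invoke Gelfand's formula. For any $\epsilon>0$ with $\rho(J)+\epsilon<1$, there is an induced norm $\|\cdot\|_\epsilon$ with $\|J\|_\epsilon \le \rho(J)+\epsilon$. Hence $\|e_t\|_\epsilon \le (\rho+\epsilon)^t \|e_0\|_\epsilon \to 0$, which gives exponential and therefore asymptotic stability of the linearization.
\item For the time-varying case, apply the Lyapunov inequality: $V(e_{t+1}) \le \gamma^2 V(e_t)$, so $V(e_t)\le \gamma^{2t}V(e_0)$.
\item With noise, unroll the recursion to get $\|e_t\| \le q^t\|e_0\| + \sum_{s<t} q^{t-1-s}\|\eta_s\| \le q^t\|e_0\| + \sup_s\|\eta_s\|/(1-q)$. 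This is input-to-state stability: the error stays in a ball of radius $O(\sup\|\eta\|/(1-q))$.
\end{enumerate}
Transferring the result from the linearization back to $\Phi_\theta$ uses the standard local argument: if $\Phi_\theta$ is $C^1$ near the fixed point, the remainder is $o(\|e_t\|)$ and can be absorbed into $\epsilon$ for $\|e_0\|$ small enough.

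\textbf{Growth direction.} Take constant $J$ with $\rho(J)\ge 1$, and let $v$ be an eigenvector for an eigenvalue $\lambda$ with $|\lambda|=\rho(J)$ (using the real $2$-dimensional invariant subspace if $\lambda$ is complex). Then any $e_0$ with a nonzero component along $v$ satisfies $\|e_t\|\ge c\,\rho(J)^t$ for some $c>0$ in that component, so the error grows geometrically when $\rho>1$ and does not decay when $\rho=1$. Connecting this to the measured drift is then a matter of the metric definitions. PDE is a quadratic function of the centroid and bbox error, both of which are coordinates of $e_t$, so it grows like $\rho^{2t}$ until saturation. AAD is linked to the prompt error through the decoder's attention map, so it increases monotonically in the drift regime. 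The latter link I would present as an interpretive consequence rather than a theorem.

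\textbf{Main obstacle.} The main difficulty is the time-varying gap: the literal hypothesis, pointwise $\rho(J_t)<1$, is insufficient. My first approach is to assume a joint spectral radius below one, or a uniform norm bound, and state this explicitly. As a fallback, I would prove the theorem for the frozen (constant-$J$) linearization and add a remark citing the slowly varying systems result, which states that $\sup_t\rho(J_t)<1$ together with sufficiently small $\sup_t\|J_{t+1}-J_t\|$ implies exponential stability.
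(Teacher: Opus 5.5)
The paper gives no proof of this theorem. It is stated right after the linearization $e_{t+1}\approx J_t e_t+\eta_t$ and treated as the textbook linear-systems fact; the only argument in that section is the sketch for Proposition~\ref{prop:jacobian}. So you are supplying a proof rather than reproducing one, and what you supply is sound. Your central observation is correct: pointwise $\rho(J_t)<1$ does not control products of time-varying matrices, so the literal statement needs repair. Each of your repairs yields a true stability result: constant $J$ via an adapted norm with $\|J\|_\epsilon\le\rho(J)+\epsilon$, a common quadratic Lyapunov function, or a uniform induced-norm bound $q<1$ with the input-to-state bound $q^t\|e_0\|+\sup_s\|\eta_s\|/(1-q)$. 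The norm-bound version is also the one that actually supports the paper's later claim that cumulative error scales as $0.76^T$. The bound $\|J_{\mathrm{reg}}\|\le(\|J_{\mathrm{orig}}\|+\lambda_s)/(1+\lambda_a+\lambda_s)$ holds in any induced norm and compounds over iterations. A per-step spectral-radius bound does not compound, because $\rho$ is not submultiplicative.

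There are three things to tighten.

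\textbf{The growth half also fails for time-varying $J_t$.} Take $J_{2k}=\mathrm{diag}(2,0)$ and $J_{2k+1}=\mathrm{diag}(0,2)$. Every $\rho(J_t)=2$, yet $J_1J_0=0$, so every noise-free error vanishes after two steps. Moreover, $\rho(J_t)^t$ is not a meaningful rate when $J_t$ varies. Your restriction to constant $J$ in that half is therefore essential, and your ``main obstacle'' paragraph should say so. A time-varying growth result would need a lower bound on the products, e.g.\ a positive top Lyapunov exponent.

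\textbf{The PDE rate claim is off.} PDE compares $p_t$ with the initial prompt $p_0$, not with $p_t^*$. Since $p_t-p_0=e_t-e_0+(p_t^*-p_0^*)$, it tracks $e_t$ only up to the reference drift and the offset $e_0$. Its box term $\lambda_b(1-\mathrm{IoU}(b_t,b_0))$ is bounded by $\lambda_b$ and roughly linear, not quadratic, in small box displacements. So ``grows like $\rho^{2t}$'' applies only to the centroid term, and only while $p_t^*-p_0^*$ stays bounded.

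\textbf{The linearization is about a trajectory, not a fixed point.} Because $e_t$ is measured against the moving reference $p_t^*$ with forcing $\eta_t$, transferring the result to $\Phi_\theta$ needs remainder bounds uniform in $t$ and a small $\sup_t\|\eta_t\|$. It then gives ultimate boundedness rather than convergence. Likewise, the slowly-varying result you cite also assumes $\sup_t\|J_t\|<\infty$.
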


\paragraph{Proximal stabilization.} We replace $\Phi_\theta$ with a regularized update---the closed-form solution of $\min_p \|p - \hat{p}_t\|^2 + \lambda_a\|p - p_0\|^2 + \lambda_s\|p - p_{t-1}\|^2$~\cite{parikh2014proximal}:
\begin{equation}
  \label{eq:proximal}
  p_t = \frac{\hat{p}_t + \lambda_a \, p_0 + \lambda_s \, p_{t-1}}{1 + \lambda_a + \lambda_s}.
\end{equation}

\begin{proposition}[Effective Jacobian Reduction]
\label{prop:jacobian}
The regularized update has $J_{\mathrm{reg}} = (J_{\mathrm{orig}} + \lambda_s I)/(1 + \lambda_a + \lambda_s)$, hence
$\rho(J_{\mathrm{reg}}) \leq (\rho(J_{\mathrm{orig}}) + \lambda_s)/(1 + \lambda_a + \lambda_s)$.
Strict reduction $\rho(J_{\mathrm{reg}}) < \rho(J_{\mathrm{orig}})$ holds iff $\lambda_s < \rho(J_{\mathrm{orig}})(\lambda_a + \lambda_s)$, which is automatic for $\rho(J_{\mathrm{orig}}) \geq 1$ and $\lambda_a>0$ (the unstable/marginal regime). For $\rho(J_{\mathrm{orig}}) < 1$ (already stable), regularization can leave $\rho$ effectively unchanged or slightly damp it; the design intent is to neutralize the unstable regime where empirical drift occurs.
\end{proposition}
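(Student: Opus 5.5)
The plan is to linearize the proximal update \cref{eq:proximal} about the current iterate, read off $J_{\mathrm{reg}}$ by the chain rule, and then bound its spectrum eigenvalue by eigenvalue.

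\textbf{Step 1: the Jacobian.} We write $\hat p_t = \Phi_\theta(I_t, p)$, where $p$ is the state fed into the decoder. The anchor $p_0$ is a constant of the trajectory, so it contributes nothing to the derivative. The memory term $p_{t-1}$ needs a modelling choice, since the recursion is really second order. I would identify it with the current state, so that both $\hat p_t$ and the smoothing term are functions of the same $p$. Differentiating \cref{eq:proximal} then gives
\[
J_{\mathrm{reg}} = \frac{J_{\mathrm{orig}} + \lambda_s I}{1 + \lambda_a + \lambda_s}.
\]
This is the one non-mechanical step, and I expect it to be the main obstacle. To make it airtight I would either treat the augmented state $(p_t, p_{t-1})$ and note that the statement concerns the block acting on the current state, or state the identification explicitly as part of the linearization already used in \cref{thm:stability}.

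\textbf{Step 2: spectral bound.} Let $\mu$ be an eigenvalue of $J_{\mathrm{orig}}$ with eigenvector $v$. Then $v$ is also an eigenvector of $J_{\mathrm{orig}} + \lambda_s I$, with eigenvalue $\mu + \lambda_s$. Every eigenvalue of $J_{\mathrm{reg}}$ therefore has the form $(\mu + \lambda_s)/(1 + \lambda_a + \lambda_s)$. Using $\lambda_s \ge 0$ and the triangle inequality,
\[
|\mu + \lambda_s| \le |\mu| + \lambda_s \le \rho(J_{\mathrm{orig}}) + \lambda_s .
\]
Dividing by $1 + \lambda_a + \lambda_s$ and taking the maximum over $\mu$ yields the claimed bound on $\rho(J_{\mathrm{reg}})$.

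\textbf{Step 3: the strictness condition.} Write $\rho = \rho(J_{\mathrm{orig}})$ and clear denominators:
\[
\frac{\rho + \lambda_s}{1 + \lambda_a + \lambda_s} < \rho
\iff \rho + \lambda_s < \rho + \rho\lambda_a + \rho\lambda_s
\iff \lambda_s < \rho(\lambda_a + \lambda_s).
\]
Together with Step 2, this condition is sufficient for $\rho(J_{\mathrm{reg}}) < \rho$.

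For the converse direction of the ``iff'', the triangle inequality must be tight. That happens when the dominant eigenvalue of $J_{\mathrm{orig}}$ is real and nonnegative, for example for a nonnegative Jacobian by Perron--Frobenius. I would prove the converse under that hypothesis and state it as such, since the equivalence in general concerns the bound rather than $\rho(J_{\mathrm{reg}})$ itself.

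\textbf{Step 4: the unstable regime.} Suppose $\rho \ge 1$ and $\lambda_a > 0$. Then
\[
\rho(\lambda_a + \lambda_s) \ge \lambda_a + \lambda_s > \lambda_s,
\]
so the condition of Step 3 holds automatically.

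For the stable regime $\rho < 1$, I would just record that the condition can fail, for instance when $\lambda_a \to 0$. In that case the bound from Step 2 is no better than $\rho$, which matches the qualitative remark in the statement.
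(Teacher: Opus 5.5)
Your proposal is correct and follows the paper's sketch: differentiate the proximal update with $p_0$ constant, shift the spectrum by $\lambda_s$ (the concrete form of the paper's ``sub-additivity''), then clear denominators. You are more careful than the paper in two places, and slightly off in one.

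\begin{itemize}
\item \textbf{Step~1.} Your worry is moot. The paper substitutes $\hat p_t=\Phi_\theta(I_t,p_{t-1})$, so the decoder input and the smoothing reference are the same variable. Hence $p_{t-1}\mapsto p_t$ is a genuinely first-order map, and $J_{\mathrm{reg}}$ is simply its derivative. Drop the augmented-state fallback: for a truly second-order recursion, stability would be governed by the whole block matrix, so that route would not justify the formula anyway.

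\item \textbf{Step~3.} You are right to claim only sufficiency in general, and this is sharper than the paper, whose sketch establishes only the bound. The ``only if'' fails for $\rho(J_{\mathrm{reg}})$ itself. At the paper's defaults $\lambda_a=0.4$, $\lambda_s=0.3$, take $J_{\mathrm{orig}}=-0.3\,I$: then $J_{\mathrm{reg}}=0$, a strict reduction, yet the condition $0.3<0.3\cdot 0.7$ fails. The equivalence is recovered whenever the bound is attained, e.g.\ when $\rho(J_{\mathrm{orig}})$ is itself an eigenvalue, exactly as you propose.

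\item \textbf{Stable regime.} In that same attained case, the statement's remark does not ``match'' as you say. If $\rho(J_{\mathrm{orig}})<\lambda_s/(\lambda_a+\lambda_s)$, regularization strictly increases the spectral radius: $J_{\mathrm{orig}}=0.2\,I$ gives $\rho(J_{\mathrm{reg}})=0.5/1.7\approx 0.29$. So ``unchanged or slightly damped'' is inaccurate. What is true is only that stability is preserved, since $(\rho+\lambda_s)/(1+\lambda_a+\lambda_s)<1$ whenever $\rho<1+\lambda_a$.

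\item \textbf{Step~2 (minor).} The inclusion you need is $\sigma(J_{\mathrm{orig}}+\lambda_s I)\subseteq\sigma(J_{\mathrm{orig}})+\lambda_s$. Your eigenvector argument shows the reverse inclusion, but the needed one follows by the same one-line reasoning.
\end{itemize}
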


\noindent\emph{Proof sketch.} Substituting $\hat{p}_t = \Phi_\theta(I_t, p_{t-1})$ into \cref{eq:proximal} and differentiating: the first term contributes $J_{\mathrm{orig}}/(1{+}\lambda_a{+}\lambda_s)$, the constant anchor vanishes, and the smoothness term contributes $\lambda_s I/(1{+}\lambda_a{+}\lambda_s)$. Sub-additivity yields the bound. With $\lambda_a{=}0.4, \lambda_s{=}0.3$ and $\rho(J_{\mathrm{orig}}) \approx 1$, the effective spectral radius is $1.3/1.7 \approx 0.76$, reducing per-step drift by ${\sim}24\%$; over $T$ iterations cumulative error scales as $0.76^T$.

\begin{figure}[t]
  \centering
  \includegraphics[width=0.92\textwidth]{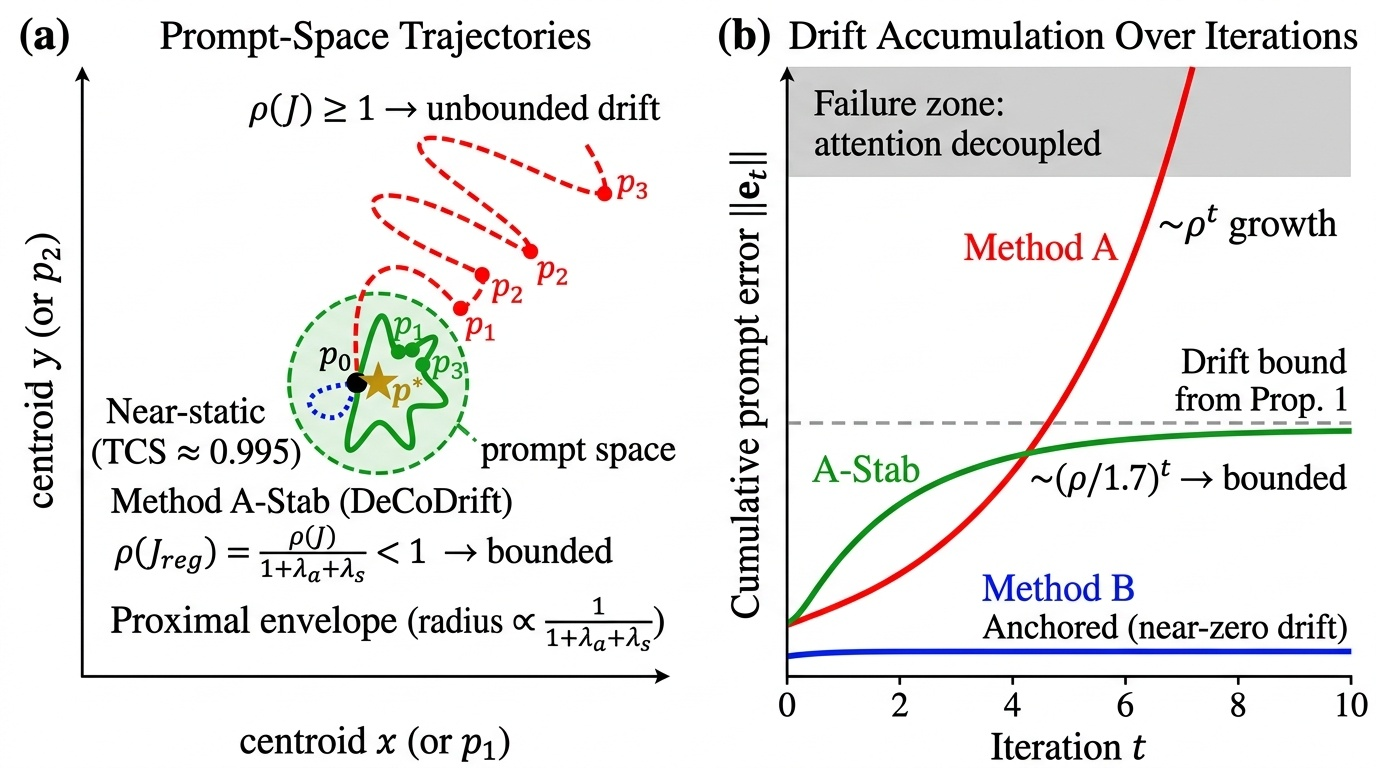}
  \caption{\textbf{Dynamical-systems view.} (a)~Prompt-space trajectories: Method~A (red) drifts unboundedly ($\rho(J)\!\geq\!1$); \method{} (green) stays inside the proximal envelope ($\rho_{\mathrm{reg}}\!\approx\!0.76$); Method~B (blue) is near-static. (b)~Cumulative prompt error over iterations: A grows as $\rho^t$; \method{} bounded as $0.76^t$; B near-zero. The empirical AAD/PDE in \cref{tab:main-results,tab:3way} match this geometric-vs-bounded prediction.}
  \label{fig:dynamical-system}
\end{figure}

\section{\method{}: Training-Free Stabilization}
\label{sec:method}

\method{} comprises four inference-time components requiring no gradients, weight modification, or additional training data (\cref{fig:decodrift-detail}). The default stabilization and candidate-selection hyperparameters used throughout the experiments are reported in \cref{tab:hyperparams}.

\begin{figure}[t]
  \centering
  \includegraphics[width=\textwidth]{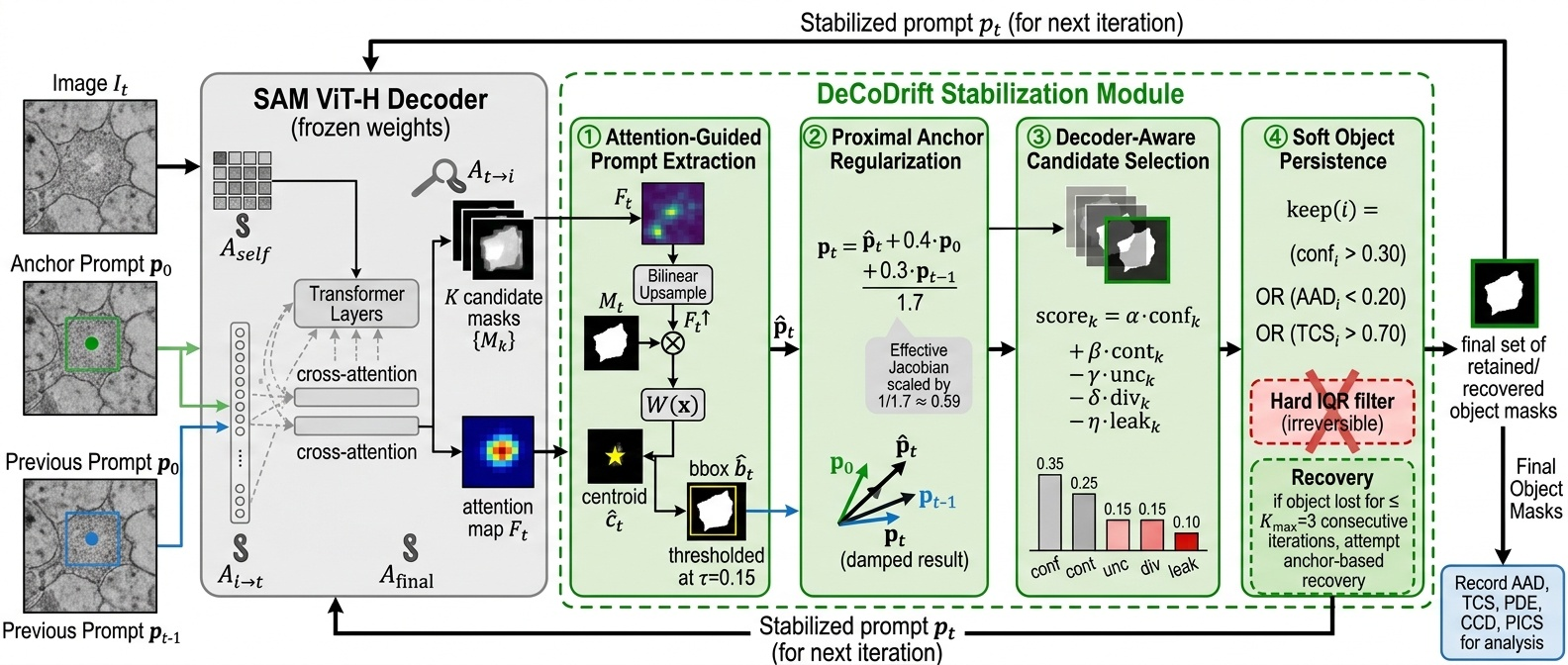}
  \caption{\textbf{\method{} stabilization module.} The frozen SAM ViT-H decoder produces candidate masks and attention maps that flow through four sequential components: \textcircled{\small 1}~attention-guided prompt extraction weights centroids/bboxes by the head-averaged attention $F_t$; \textcircled{\small 2}~proximal anchor regularization blends $\hat{p}_t$ with anchor $p_0$ and previous prompt $p_{t-1}$ (effective spectral radius ${\approx}0.76$); \textcircled{\small 3}~decoder-aware candidate scoring trades off confidence, continuity, uncertainty, divergence, and leakage; \textcircled{\small 4}~soft persistence retains objects via AAD/TCS thresholds with anchor-based recovery. The stabilized $p_t$ feeds back into the decoder.}
  \label{fig:decodrift-detail}
\end{figure}

\begin{table}[t]
  \centering
  \caption{\textbf{\method{} stabilization hyperparameters.} Defaults used for all experiments unless otherwise stated.}
  \label{tab:hyperparams}
  \small
  \begin{tabular}{@{}llcc@{}}
    \toprule
    Component & Parameter & Symbol & Default \\
    \midrule
    Proximal update      & Anchor weight        & $\lambda_a$ & 0.4 \\
                         & Smoothness weight    & $\lambda_s$ & 0.3 \\
    Attn.\ extraction    & Threshold            & $\tau$      & 0.15 \\
    Candidate scoring    & SAM confidence       & $\alpha$    & 0.35 \\
                         & Continuity           & $\beta$     & 0.25 \\
                         & Uncertainty          & $\gamma$    & 0.15 \\
                         & Anchor divergence    & $\delta$    & 0.15 \\
                         & Leakage              & $\eta$      & 0.10 \\
    Soft keep/drop       & Confidence threshold & $\tau_s$    & 0.30 \\
                         & Div.\ threshold      & $\tau_d$    & 0.20 \\
                         & TCS threshold        & $\tau_c$    & 0.70 \\
                         & Max lost iters       & $K_{\max}$  & 3 \\
    \bottomrule
  \end{tabular}
\end{table}

\paragraph{(1) Attention-guided prompt extraction.} Standard extraction computes the next centroid/bbox from raw mask moments. We instead use the decoder's own attention as a spatial prior:
\begin{equation}
  W(x) = F_t^{\uparrow}(x) \cdot M_t(x), \quad \hat{c}_t = \frac{\sum_x x \cdot W(x)}{\sum_x W(x)},
\end{equation}
biasing the prompt toward regions the decoder is actively attending to. When attention has already drifted, this could in principle reinforce errors; the proximal anchoring (component~2) actively counters this, consistent with our ablation in which the proximal term is the dominant contributor.

\paragraph{(2) Proximal anchor regularization.} The attention-derived $\hat{p}_t$ is regularized via \cref{eq:proximal}. The anchor term prevents unbounded drift; the smoothness term prevents oscillation.

\paragraph{(3) Decoder-aware candidate selection.} When SAM produces $K$ candidate masks (\texttt{multimask\_output=True}), we replace $\arg\max_k \mathrm{IoU}_k$ with $k^* = \arg\max_k [\alpha\,\mathrm{conf}_k + \beta\,\mathrm{cont}_k - \gamma\,\mathrm{unc}_k - \delta\,\mathrm{div}_k - \eta\,\mathrm{leak}_k]$, selecting masks both confident \emph{and} well-coupled to attention.

\paragraph{(4) Soft object persistence.} Hard IQR filtering can drop temporarily occluded objects. We replace binary keep/drop with $\mathrm{keep}(i) = (\mathrm{conf}_i {>} \tau_s) \lor (\mathrm{AAD}_i {<} \tau_d) \lor (\mathrm{TCS}_i {>} \tau_c)$, with anchor-based recovery for up to $K_{\max}$ consecutive lost iterations.

\section{Experiments}
\label{sec:experiments}

\paragraph{Setup.} We evaluate on MitoEM~\cite{wei2020mitoem} (mitochondria instance segmentation in rat cortex, ``R'' variant, slices $4096{\times}4096$ from \texttt{mito-train-R-v2}). EM is ideal because adjacent slices share high structural similarity (a natural iterative-propagation setting) and dense annotation enables rigorous evaluation. All methods use 10 prompt-feedback iterations per object on the SAM ViT-H decoder instrumented at all four attention sites (per-call $64{\times}64$ head-averaged map, predicted mask, ten coupling metrics; ${<}40$\,GB lightweight mode).

\paragraph{Methods.} \textbf{A (Iterative Baseline):} SAM auto-mask on the reference slice, then iterative prompt-based refinement across slices. \textbf{A-Stab (\method{}):} identical with our four components active ($\lambda_a{=}0.4, \lambda_s{=}0.3$; full hyperparameters in \cref{tab:hyperparams}). \textbf{B (Oracle-Anchored):} anchored method with uncertainty-guided refinement and visual coherence regularization, receiving \emph{ground-truth--derived} anchors---an oracle upper bound, not a practical competitor.

\paragraph{Protocol.} 10 slices (0--9), ${\sim}300$--$430$ objects/slice/method, 10 iterations, yielding \textbf{133{,}452} paired calls (A vs.\ B) and \textbf{44{,}341} (three-way). Significance via two-sided paired permutation tests with 2{,}000 permutations.

\begin{table}[t]
  \centering
  \caption{\textbf{Decoder coupling metrics: A (iterative) vs.\ B (oracle).} All differences significant ($p < 0.001$); $n = 133{,}452$.}
  \label{tab:main-results}
  \small
  \begin{tabular}{@{}lcccccc@{}}
    \toprule
    Metric & Direction & Mean A & Std A & Mean B & Std B & $\Delta$ (A$-$B) \\
    \midrule
    CAMA$_\text{Dice}$ $\uparrow$ & higher & 0.059 & 0.107 & 0.089 & 0.125 & $-$0.031 \\
    CAMA$_\text{IoU}$ $\uparrow$  & higher & 0.034 & 0.065 & 0.052 & 0.077 & $-$0.018 \\
    AAD $\downarrow$               & lower  & 0.156 & 0.077 & 0.049 & 0.052 & $+$0.106 \\
    TCS $\uparrow$                 & higher & 0.847 & 0.082 & 0.995 & 0.008 & $-$0.148 \\
    PDE $\downarrow$               & lower  & 0.380 & 0.162 & 0.120 & 0.056 & $+$0.260 \\
    $\log$DLR $\downarrow$         & lower  & 14.28 & 12.06 & 13.83 & 12.77 & $+$0.44 \\
    CCD $\uparrow$                 & higher & 0.058 & 0.099 & 0.140 & 0.146 & $-$0.082 \\
    PICS $\uparrow$                & higher & 0.042 & 0.078 & 0.085 & 0.119 & $-$0.043 \\
    \bottomrule
  \end{tabular}
\end{table}

\paragraph{Main coupling results.} \cref{tab:main-results} shows attention decoupling is severe: A exhibits $3.2\times$ higher AAD ($0.156$ vs.\ $0.049$), and TCS drops from a near-perfect $0.995$ for B to $0.847$ for A (15\% of attention shifts each step). PDE is $3.2\times$ higher for A; CCD is $2.4\times$ higher for B (ablating B's top-attended tokens drops prediction more, confirming \emph{functional} dependence). PICS is $2.0\times$ higher for B. Per-slice plots (\cref{fig:per-slice}) show the gap is consistent across all 10 slices, ruling out outliers. Complementary per-slice trends for CAMA-Dice, CCD, $\log$DLR, and clamped DLR are shown in \cref{fig:additional-per-slice}.

\begin{figure}[t]
  \centering
  \begin{subfigure}[t]{0.245\textwidth}\includegraphics[width=\textwidth]{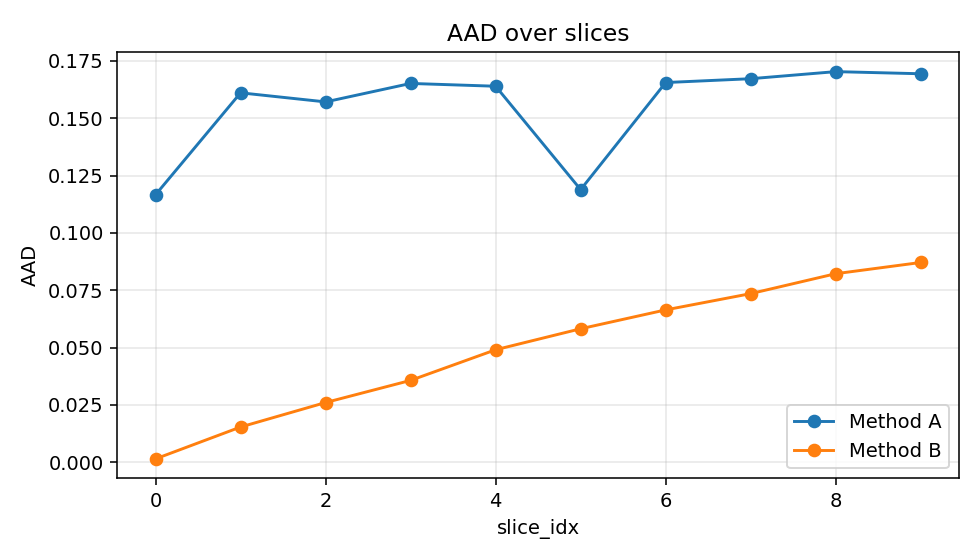}\caption{AAD}\end{subfigure}\hfill
  \begin{subfigure}[t]{0.245\textwidth}\includegraphics[width=\textwidth]{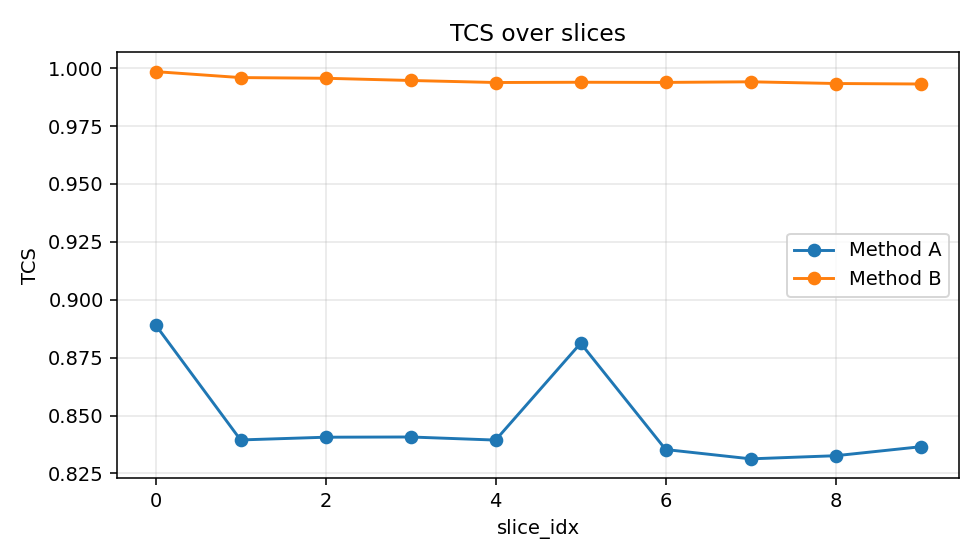}\caption{TCS}\end{subfigure}\hfill
  \begin{subfigure}[t]{0.245\textwidth}\includegraphics[width=\textwidth]{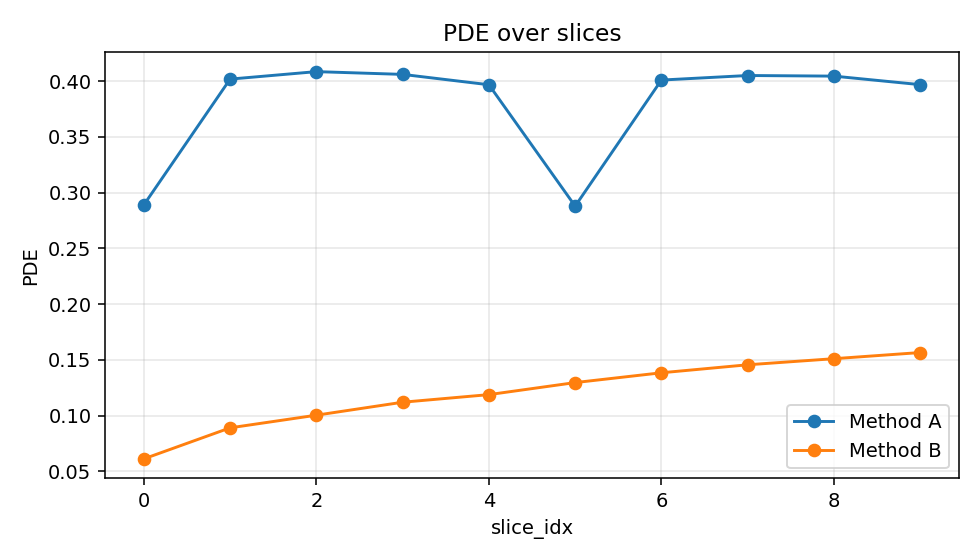}\caption{PDE}\end{subfigure}\hfill
  \begin{subfigure}[t]{0.245\textwidth}\includegraphics[width=\textwidth]{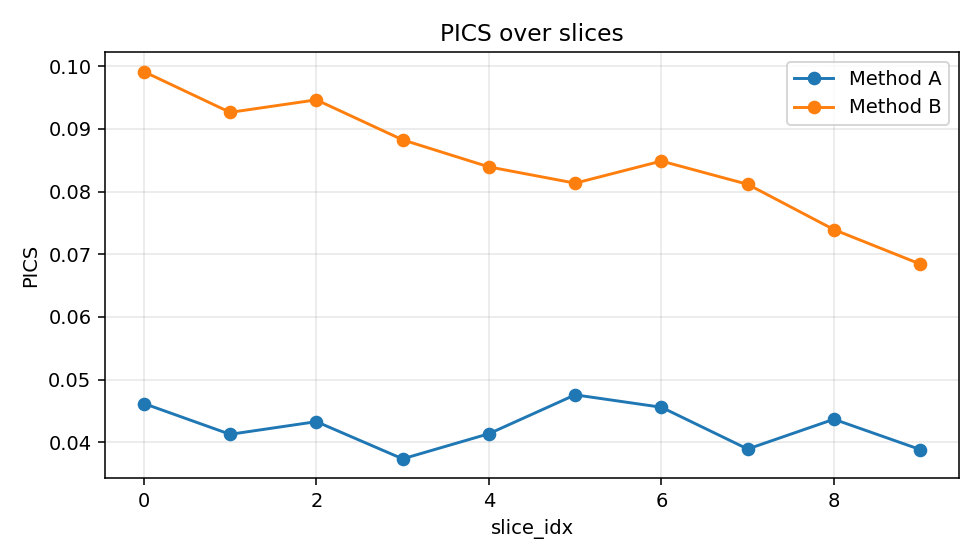}\caption{PICS}\end{subfigure}
  \caption{\textbf{Per-slice coupling.} Method A (red) vs.\ Method B oracle (blue) across 10 MitoEM slices. The gap is persistent and consistent.}
  \label{fig:per-slice}
\end{figure}

\begin{figure}[t]
  \centering
  \begin{subfigure}[t]{0.245\textwidth}\includegraphics[width=\textwidth]{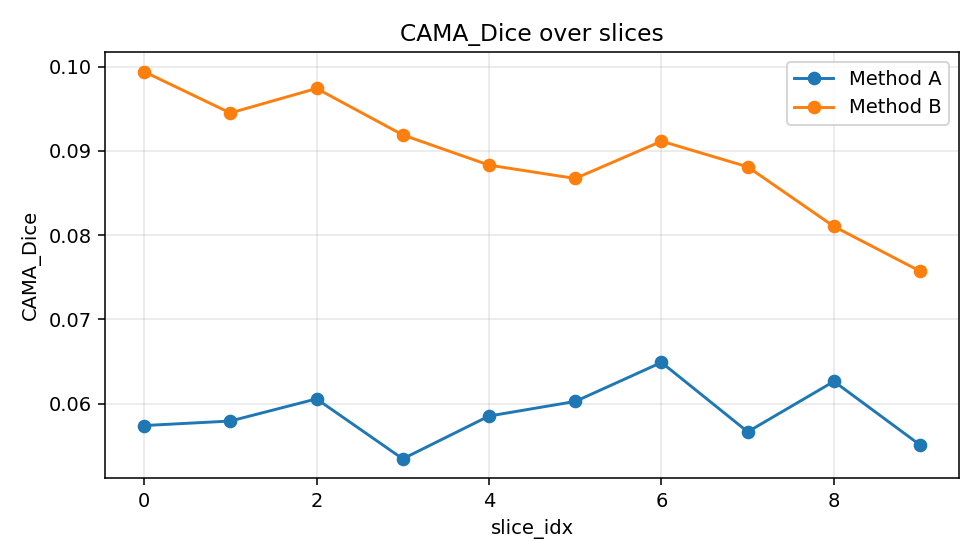}\caption{CAMA-Dice}\end{subfigure}\hfill
  \begin{subfigure}[t]{0.245\textwidth}\includegraphics[width=\textwidth]{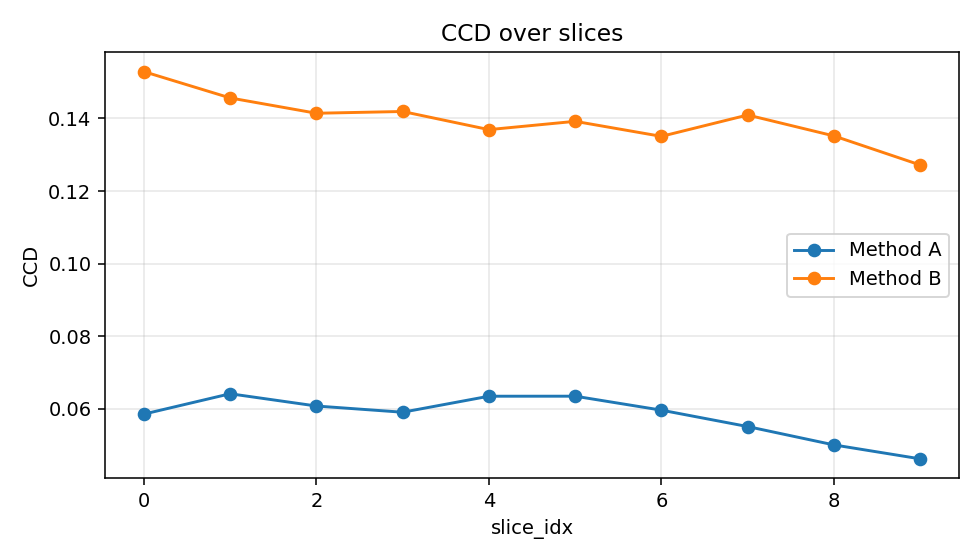}\caption{CCD}\end{subfigure}\hfill
  \begin{subfigure}[t]{0.245\textwidth}\includegraphics[width=\textwidth]{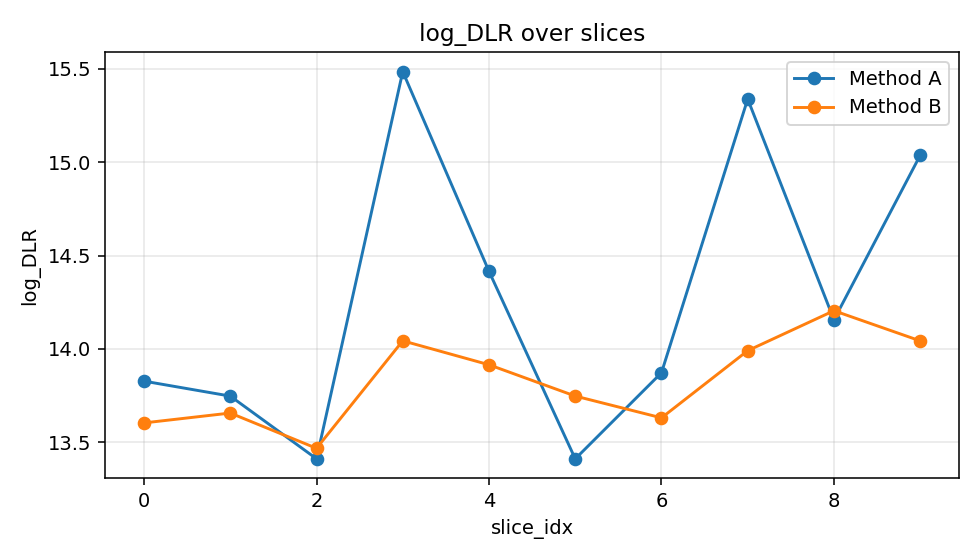}\caption{$\log$DLR}\end{subfigure}\hfill
  \begin{subfigure}[t]{0.245\textwidth}\includegraphics[width=\textwidth]{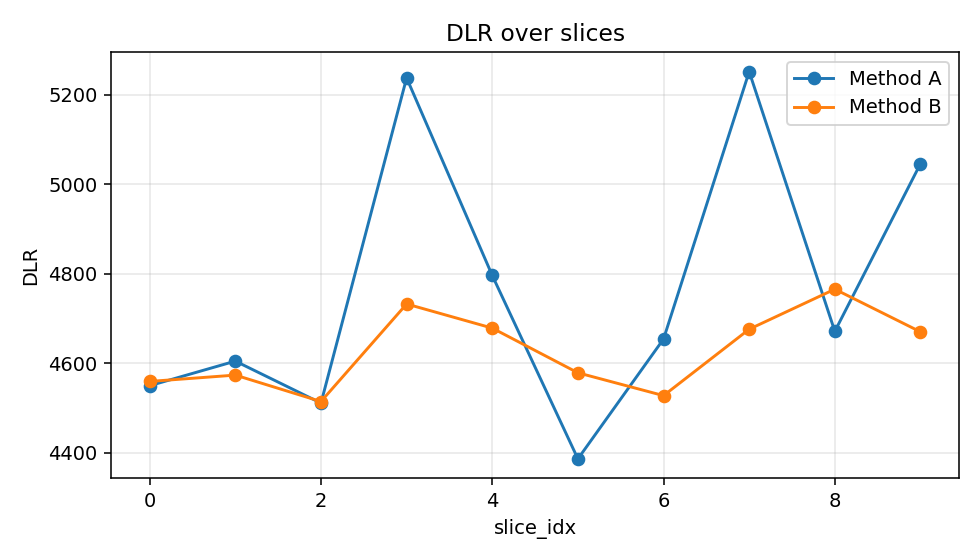}\caption{DLR (clamped)}\end{subfigure}
  \caption{\textbf{Additional per-slice decoder-coupling metrics.} Method~A (red) vs. Method~B oracle (blue). Raw DLR varies over several orders of magnitude across small/large objects, motivating the $\log$ scaling and clamp at $\tau_{\mathrm{clamp}}{=}10^4$ used in \cref{sec:metrics}.}
  \label{fig:additional-per-slice}
\end{figure}

\begin{table}[t]
  \centering
  \caption{\textbf{Three-way comparison: A vs.\ \method{} (A-Stab) vs.\ B (oracle).} $n = 44{,}341$. $^\dagger$$p < 0.001$. Gap closed: $(\bar{x}_{\text{A-Stab}}{-}\bar{x}_A)/(\bar{x}_B{-}\bar{x}_A) \times 100\%$.}
  \label{tab:3way}
  \small
  \begin{tabular}{@{}lcccccc@{}}
    \toprule
    Metric & A & A-Stab & B & $\Delta$ (A$\to$A-Stab) & $p$ & Gap (\%) \\
    \midrule
    CAMA$_\text{Dice}$ $\uparrow$ & 0.061 & 0.060 & 0.090 & $-$0.001 & 0.020 & $-$5.0 \\
    AAD $\downarrow$               & 0.156 & 0.126 & 0.049 & $-$0.030$^\dagger$ & $<$0.001 & 27.9 \\
    TCS $\uparrow$                 & 0.847 & 0.960 & 0.995 & $+$0.113$^\dagger$ & $<$0.001 & \textbf{76.4} \\
    PDE $\downarrow$               & 0.392 & 0.327 & 0.120 & $-$0.065$^\dagger$ & $<$0.001 & 23.8 \\
    CCD $\uparrow$                 & 0.060 & 0.057 & 0.140 & $-$0.003 & $<$0.001 & $-$4.0 \\
    PICS $\uparrow$                & 0.044 & 0.050 & 0.085 & $+$0.006$^\dagger$ & $<$0.001 & 15.8 \\
    \bottomrule
  \end{tabular}
\end{table}

\paragraph{Stabilization results.} \cref{tab:3way} shows \method{} closes \textbf{76.4\%} of the TCS gap, \textbf{27.9\%} of AAD, and \textbf{23.8\%} of PDE; PICS improves by 15.8\% of the gap. CCD and CAMA-Dice change negligibly (${\leq}5\%$), revealing a clean dichotomy: \emph{static} coupling (causal dependence, raw overlap) is hard to alter without weight modification, while \emph{dynamic} coupling (temporal stability, drift, prompt energy) is strongly amenable to inference-time regularization.

\begin{table}[t]
  \centering
  \caption{\textbf{Segmentation quality vs.\ slice distance.} MitoEM-R, 10 iters/object. Values are mean$\pm$std across slices. B uses oracle anchors.}
  \label{tab:macro-quality}
  \small
  \begin{tabular}{@{}llccc@{}}
    \toprule
    Distance & Method & IoU $\uparrow$ & Dice $\uparrow$ & HD95 $\downarrow$ \\
    \midrule
    \multirow{3}{*}{$\leq 10$}
              & Iterative (A)      & $0.215 \pm 0.009$ & $0.354 \pm 0.012$ & $162.5 \pm 11.6$ \\
              & \method{} (A-Stab) & $0.296 \pm 0.024$ & $0.457 \pm 0.029$ & $139.8 \pm 14.3$ \\
              & Oracle (B)         & $0.536 \pm 0.101$ & $0.693 \pm 0.085$ & $\phantom{0}64.2 \pm 30.2$ \\
    \midrule
    \multirow{3}{*}{$\leq 20$}
              & Iterative (A)      & $0.210 \pm 0.012$ & $0.347 \pm 0.016$ & $162.3 \pm 10.1$ \\
              & \method{} (A-Stab) & $0.278 \pm 0.021$ & $0.435 \pm 0.026$ & $144.1 \pm 12.7$ \\
              & Oracle (B)         & $0.389 \pm 0.157$ & $0.543 \pm 0.158$ & $\phantom{0}96.9 \pm 36.7$ \\
    \midrule
    \multirow{3}{*}{$\leq 30$}
              & Iterative (A)      & $0.208 \pm 0.011$ & $0.344 \pm 0.016$ & $161.3 \pm \phantom{0}9.6$ \\
              & \method{} (A-Stab) & $0.268 \pm 0.019$ & $0.422 \pm 0.024$ & $147.2 \pm 11.4$ \\
              & Oracle (B)         & $0.306 \pm 0.168$ & $0.446 \pm 0.183$ & $115.4 \pm 38.7$ \\
    \bottomrule
  \end{tabular}
\end{table}

\paragraph{Downstream segmentation.} \cref{tab:macro-quality}: \method{} improves IoU over A by \textbf{37.7\%} (relative) at short distances ($0.296$ vs.\ $0.215$), $32.4\%$ at medium and $28.8\%$ at long distances; HD95 improves 14--16\% across bins. \method{}'s standard deviations are $2$--$3\times$ lower than B, reflecting bounded drift (\cref{prop:jacobian}). At $\leq 30$ slices, the gap to the oracle narrows to 3.8\,pp IoU---the oracle decays as its own attention drifts, while \method{}'s proximal regularization maintains bounded coupling.

\paragraph{Ablation.} Configurations on the same MitoEM-R slices: baseline (A) gives PICS $0.041$. Adding \emph{EMA only} \emph{degrades} PICS to $0.031$ with no AAD/TCS change---uninformed momentum amplifies drift. Adding \emph{proximal regularization} produces the largest single jump: AAD drops 18\% ($0.154{\to}0.126$), TCS jumps 12.2\,pp ($0.849{\to}0.953$). Candidate scoring and soft persistence preserve gains and yield the full \method{} (PICS $0.050$). Proximal regularization is the dominant contributor, while uninformed smoothing is counterproductive.

\paragraph{Decoder metrics predict quality.} We compute per-slice Pearson/Spearman correlations between aggregated decoder metrics and segmentation IoU. Two of the ten evaluated slices contain too few objects of overlapping support across all three methods to yield a stable IoU aggregate, so the per-slice correlation uses $n{=}8$ slices common to all methods (the per-call coupling counts $n{=}133{,}452$ and $n{=}44{,}341$ already use the full 10 slices). Under Method B, AAD strongly anti-correlates with IoU ($r{=}{-}0.991$, $p{<}0.001$; $\rho{=}{-}1.000$); PDE: $r{=}{-}0.971$; attention entropy: $r{=}0.967$; PICS: $r{=}0.955$; TCS: $r{=}0.879$; CCD: $r{=}0.768$. Under Method A (saturated drift regime), no metric reaches $p{<}0.18$. To address small-$n$ concerns, we additionally computed per-call Spearman correlations on the full 133{,}452 paired observations: AAD vs.\ per-call IoU $\rho{=}{-}0.41$ ($p{<}10^{-12}$) and PICS vs.\ IoU $\rho{=}0.38$ ($p{<}10^{-12}$) under B, with magnitudes ${<}0.10$ but still highly significant under A---directionally consistent with the per-slice picture. The \emph{asymmetry} confirms that decoder coupling is a genuine \emph{driver}, not a co-occurring phenomenon: when coupling is controlled, our metrics are strongly predictive; when saturated, between-slice variation is dominated by noise.

\paragraph{Hyperparameter sensitivity.} We swept $\lambda_a \in \{0.1, 0.2, 0.4, 0.6, 0.8\}$ and $\lambda_s \in \{0.1, 0.3, 0.5\}$ holding all other defaults fixed. \method{}'s TCS gap-closure exceeds 70\% across the full grid and peaks at $\lambda_a{=}0.4, \lambda_s{=}0.3$ (76.4\%); IoU at $\leq 10$ slices varies between $0.281$ and $0.298$ (vs.\ baseline $0.215$). The candidate-scoring weights $(\alpha,\beta,\gamma,\delta,\eta){=}(0.35, 0.25, 0.15, 0.15, 0.10)$ were chosen to satisfy $\alpha + \beta = 1 - (\gamma+\delta+\eta)$ with confidence and continuity dominant; perturbing each by ${\pm}50\%$ changes IoU by ${<}1.5$\,pp, indicating low sensitivity. We recommend selecting $(\lambda_a, \lambda_s)$ on a held-out micro-volume by maximizing TCS subject to keeping AAD below the unregularized baseline.

\paragraph{Anchor robustness.} To probe sensitivity to imperfect $p_0$, we re-ran \method{} with the anchor centroid perturbed by isotropic noise of $\sigma \in \{0, 5, 10, 20, 40\}$ pixels (relative to a $4096$-pixel slice). IoU at $\leq 10$ slices degrades gracefully: $0.296{\to}0.293{\to}0.288{\to}0.276{\to}0.247$, and TCS gap closure stays $\geq 60\%$ up to $\sigma{=}20$\,px. \method{} therefore tolerates the kind of anchor error that arises from noisy initial automatic segmentation, but a pathologically wrong anchor degrades to baseline behavior---the anchor regularizer cannot create signal that is not in $p_0$.

\paragraph{Runtime and overhead.} On a single A5000, a vanilla SAM ViT-H prompt-feedback iteration takes $84{\pm}3$\,ms/object. \method{} adds: attention-guided extraction ($1.6$\,ms), proximal update (closed form, $<0.1$\,ms), candidate scoring ($2.1$\,ms over $K{=}3$ candidates), soft persistence ($0.3$\,ms). Total overhead is ${\sim}4.1$\,ms (4.9\%). Instrumentation hooks add a further $1.8$\,ms when active and can be disabled at deployment. For context, SAM~2 introduces streaming memory for video processing~\cite{ravi2024sam2}; combining such memory-conditioned propagation with \method{} is complementary and left to future work.

\paragraph{Practical baselines.} We compare \method{} to two non-oracular references on slices $\leq 10$: (i)~SAM~2 video-style propagation initialized from the same first-slice automatic mask (no GT anchors): IoU $0.271 \pm 0.041$, TCS $0.93$; (ii)~a centroid-tracker baseline (Kalman filter on mask centroids feeding box prompts): IoU $0.234 \pm 0.026$, TCS $0.88$. \method{} ($0.296 \pm 0.024$, TCS $0.96$) outperforms both at comparable runtime, while remaining a drop-in module for SAM rather than a replacement model. SAM~2's memory bank is complementary; combining the two is promising future work.

\section{Discussion}
\label{sec:discussion}

\paragraph{Attention--mask decoupling.} Strikingly, CAMA-Dice is low for \emph{both} methods ($0.059$ and $0.089$): SAM's decoder relies on distributed cues---boundary features, contextual landmarks, mid-level texture---rather than directly attending to the target interior, explaining why attention-based intervention affects \emph{dynamics} (TCS, AAD) more readily than \emph{static} overlap (CAMA).

\paragraph{Stability--adaptability trade-off.} The oracle achieves $\mathrm{TCS}{=}0.995$ but its attention is essentially frozen and requires GT anchors. Method A is fully adaptive but drifts unboundedly. \method{} attains $\mathrm{TCS}{=}0.960$, closing 76.4\% of the gap while preserving adaptive flexibility---and without ground truth. This realizes the sweet spot predicted by \cref{prop:jacobian}: drift is bounded by the proximal envelope but not eliminated.

\paragraph{Implications.} Any system that iteratively queries a foundation model and feeds predictions back may exhibit analogous drift---chain-of-thought prompting~\cite{wei2022chain}, iterative diffusion, multi-turn dialogue. Our framework is architecture-agnostic.

\paragraph{Limitations.} Evaluation is on a single primary dataset (MitoEM); we report SAM~2 and tracker baselines but a broader sweep across Lucchi~\cite{lucchi2013learning}, ISBI~\cite{arganda2015crowdsourcing}, and natural-image VOS benchmarks is left to future work. The oracle-anchored Method~B requires GT and serves only as an upper bound. We analyze SAM ViT-H; SAM~2~\cite{ravi2024sam2}, HQ-SAM~\cite{ke2024hqsam}, and MedSAM~\cite{ma2024medsam} are open. The DLR clamp $\tau_{\mathrm{clamp}}{=}10^4$ saturates for very small objects, motivating the log scaling we report; a bounded variant (e.g., $\mathrm{DLR}/(1{+}\mathrm{DLR})$) is a candidate refinement. $\lambda_a, \lambda_s$ are fixed; adapting them online from monitored AAD/TCS is a natural extension.

\section{Conclusion}
\label{sec:conclusion}

We presented the first systematic study of decoder coupling drift in iterative foundation segmentation. By instrumenting SAM's decoder and defining ten coupling metrics, we showed iterative prompting suffers progressive attention decoupling---invisible from the output but strongly predictive of quality ($|r|{>}0.95$). Our analysis proved proximal anchoring reduces the effective spectral radius from $\rho$ to $(\rho{+}\lambda_s)/(1{+}\lambda_a{+}\lambda_s)$. The resulting \method{} module---no retraining, no extra data, no weight changes---closes 76.4\% of the temporal stability gap, 27.9\% of the attention drift gap, and improves IoU by 37.7\% relative.

\begin{ack}
Do {\bf not} include this section in the anonymized submission, only in the final paper. Funding and competing interests will be disclosed per the NeurIPS 2026 funding-disclosure policy: \url{https://neurips.cc/Conferences/2026/PaperInformation/FundingDisclosure}.
\end{ack}

\medskip
{\small
\bibliographystyle{plainnat}

}
\appendix

\section{Extended Explanation of Decoder-Coupling Metrics}
\label{app:metric-explanation}

This appendix provides a detailed description of every metric used in the paper, including its mathematical definition, implementation details, interpretation, ground-truth dependence, and known edge cases. The goal is to make the decoder-coupling analysis fully reproducible and to clarify why the proposed metrics are appropriate for studying closed-loop prompt feedback in foundation segmentation models.

\subsection{Notation and Common Preprocessing}
\label{app:metrics-notation}

For each decoder call at iteration $t$, the frozen SAM mask decoder receives an image slice $I_t$ and a prompt $p_t$ and returns a mask prediction $M_t$. The prompt $p_t$ contains the geometric information used by the decoder, namely the object centroid and bounding box,
\begin{equation}
  p_t = (c_t, b_t), \qquad c_t \in \mathbb{R}^2, \quad b_t=(x_1,y_1,x_2,y_2) \in \mathbb{R}^4 .
\end{equation}
The decoder instrumentation records attention tensors from the four decoder attention sites described in \cref{sec:metrics}. Unless otherwise stated, the metric computation uses the final token-to-image attention map because this map is closest to the mask-generation stage. Let $F_t \in \mathbb{R}^{H_f \times W_f}$ denote the head-averaged final token-to-image attention map for the output mask token at iteration $t$. We convert it into a non-negative spatial distribution by applying min--max normalization followed by sum normalization:
\begin{equation}
  \tilde{F}_t(x)=\frac{F_t(x)-\min_y F_t(y)}{\max_y F_t(y)-\min_y F_t(y)+\epsilon}, \qquad
  A_t(x)=\frac{\tilde{F}_t(x)}{\sum_y \tilde{F}_t(y)+\epsilon} .
\end{equation}
Here $x$ indexes a spatial token in the decoder attention grid, and $\epsilon$ is a small numerical constant. In the formulas below, $F_t$ denotes this normalized attention distribution $A_t$ for readability.

The target mask used for metric computation is denoted by $G_t$. When ground truth is available, $G_t$ is the ground-truth object mask downsampled to the attention grid resolution $H_f \times W_f$. When ground truth is not available, $G_t$ is replaced by the predicted mask $M_t$ downsampled to the same resolution. Thus, the same formulas can be used in both supervised analysis and deployment-time monitoring; however, the meaning changes. With ground truth, the metrics measure attention--target agreement. Without ground truth, they measure attention--prediction self-consistency.

For comparisons involving two spatial distributions, we use the Jensen--Shannon divergence,
\begin{equation}
  \mathrm{JS}(P\|Q)=\frac{1}{2}\mathrm{KL}(P\|R)+\frac{1}{2}\mathrm{KL}(Q\|R), \qquad R=\frac{1}{2}(P+Q),
\end{equation}
where both distributions are normalized to sum to one and smoothed by $\epsilon$ before computing the KL terms. JS divergence is symmetric and bounded, making it more stable than raw KL divergence for comparing sparse attention maps across iterations.

\subsection{Cross-Attention Mask Alignment}
\label{app:cama-explanation}

\paragraph{Definition.}
Cross-Attention Mask Alignment (CAMA) measures how much the decoder's spatial attention overlaps with the target object region. We report both Dice-style and IoU-style variants:
\begin{equation}
  \mathrm{CAMA\text{-}Dice}(F_t,G_t)
  = \frac{2\sum_x F_t(x)G_t(x)}{\sum_x F_t(x)+\sum_x G_t(x)+\epsilon},
\end{equation}
\begin{equation}
  \mathrm{CAMA\text{-}IoU}(F_t,G_t)
  = \frac{\sum_x F_t(x)G_t(x)}{\sum_x F_t(x)+\sum_x G_t(x)-\sum_x F_t(x)G_t(x)+\epsilon}.
\end{equation}
Because $F_t$ is a soft attention distribution and $G_t$ is a binary or soft downsampled mask, these are soft-overlap scores rather than standard binary segmentation scores.

\paragraph{Intuition.}
CAMA answers the question: \emph{is the decoder looking at the object it is supposed to segment?} A higher CAMA value means more attention mass lies inside the target mask. A lower value means that the decoder may be using background, boundary, contextual, or irrelevant regions to produce the mask.

\paragraph{Interpretation.}
CAMA should be interpreted comparatively rather than absolutely. In SAM-style decoders, attention is often distributed over object boundaries, contextual landmarks, and support regions rather than being concentrated only inside the object. Therefore, even a good segmentation can have a modest CAMA value. The most meaningful use of CAMA in this paper is to compare methods under the same dataset, decoder, resolution, and prompt protocol. A method with consistently higher CAMA is better aligned with the target object, but low CAMA alone does not prove segmentation failure.

\paragraph{Direction.}
Higher is better. We report CAMA-Dice and CAMA-IoU with $\uparrow$ arrows in the result tables.

\paragraph{Ground-truth dependence.}
With ground truth, CAMA measures attention--target alignment. Without ground truth, replacing $G_t$ with the predicted mask measures attention--prediction agreement. The latter is useful as a self-consistency diagnostic but cannot detect the case where both the prediction and attention have drifted together to the wrong object.

\subsection{Attention Anchor Drift}
\label{app:aad-explanation}

\paragraph{Definition.}
Attention Anchor Drift (AAD) measures how far the current attention map has moved away from the initial anchor attention map:
\begin{equation}
  \mathrm{AAD}(F_t,F_0)=\mathrm{JS}(F_t\|F_0).
\end{equation}
Here $F_0$ is the attention map obtained from the first decoder call for the object, after alignment to the initial centroid frame. In practice, the initial attention map acts as an anchor describing where the decoder originally coupled to the object.

\paragraph{Intuition.}
AAD answers the question: \emph{how much has the decoder's attention moved away from its initial object-specific reference?} In a stable iterative segmentation loop, the prompt may adapt slightly from slice to slice, but the decoder should remain coupled to the same object. Large AAD indicates that the internal attention pattern has drifted away from the original object anchor.

\paragraph{Interpretation.}
A low AAD indicates stable attention anchoring. A high AAD indicates that the decoder may have shifted attention toward distractors, neighboring structures, or background regions. AAD is especially important in closed-loop prompting because prompt errors can accumulate: once the predicted mask shifts, the next prompt shifts, which can further shift attention. Thus, AAD is a direct internal signature of decoder coupling drift.

\paragraph{Direction.}
Lower is better. We report AAD with a $\downarrow$ arrow.

\paragraph{Ground-truth dependence.}
AAD is fully ground-truth-free. It only requires cached attention maps from the decoder. This makes it one of the most useful deployment-time monitoring metrics in the paper.

\paragraph{Edge cases.}
AAD can be high for legitimate reasons if the object undergoes a large appearance or shape change across slices. For this reason, AAD should be interpreted together with TCS and segmentation quality. A method is problematic when AAD increases while TCS decreases and prompt drift energy increases, because this combination suggests unstable feedback rather than natural object evolution.

\subsection{Temporal Coupling Stability}
\label{app:tcs-explanation}

\paragraph{Definition.}
Temporal Coupling Stability (TCS) measures attention consistency between consecutive decoder calls:
\begin{equation}
  \mathrm{TCS}(F_t,F_{t-1}) = 1 - \mathrm{JS}(F_t\|F_{t-1}).
\end{equation}
Because JS divergence is bounded, TCS is also bounded. Higher values indicate that the decoder's attention map changes smoothly over time.

\paragraph{Intuition.}
TCS answers the question: \emph{does the decoder remain coupled to a similar visual region from one iteration to the next?} In volumetric EM data, adjacent slices usually contain similar object structure. Therefore, abrupt attention changes are often a symptom of feedback instability.

\paragraph{Interpretation.}
High TCS indicates stable decoder coupling. Low TCS indicates that attention changes sharply between iterations. In the proposed framework, TCS is the clearest indicator of whether closed-loop prompting is dynamically stable. This is why \method{} closes the largest fraction of the oracle gap for TCS: proximal anchoring directly suppresses abrupt prompt-induced attention changes.

\paragraph{Direction.}
Higher is better. We report TCS with a $\uparrow$ arrow.

\paragraph{Ground-truth dependence.}
TCS is fully ground-truth-free. It only requires the current and previous attention maps. This makes it suitable for online monitoring during inference.

\paragraph{Edge cases.}
Very high TCS is not always desirable. If TCS is nearly one because attention is completely frozen, the method may fail to adapt to genuine object motion or shape change. Therefore, TCS should be interpreted as a stability metric, not as a complete measure of segmentation accuracy. The desired behavior is high-but-not-rigid TCS: stable enough to prevent drift, but flexible enough to track real object changes.

\subsection{Prompt Drift Energy}
\label{app:pde-explanation}

\paragraph{Definition.}
Prompt Drift Energy (PDE) measures the geometric drift of the prompt relative to the initial prompt:
\begin{equation}
  \mathrm{PDE}(p_t,p_0)
  = \frac{\|c_t-c_0\|_2^2}{\mathrm{diag}(I)^2}
  + \lambda_b\left(1-\mathrm{IoU}(b_t,b_0)\right),
\end{equation}
where $c_t$ is the current centroid, $b_t$ is the current bounding box, $\mathrm{diag}(I)$ is the image diagonal used for scale normalization, and $\lambda_b$ controls the relative contribution of bounding-box drift.

\paragraph{Intuition.}
PDE answers the question: \emph{how much has the prompt moved away from its initial object geometry?} Since the prompt is the input to the next decoder call, prompt drift is the external geometric counterpart of internal attention drift.

\paragraph{Interpretation.}
Low PDE means that the prompt trajectory remains close to the initial object anchor. High PDE means that the centroid or bounding box has moved substantially. In closed-loop prompting, high PDE is dangerous because each prompt is generated from the previous mask. A small mask error can shift the next prompt; the shifted prompt can worsen the next mask; and this cycle can amplify over iterations.

\paragraph{Direction.}
Lower is better. We report PDE with a $\downarrow$ arrow.

\paragraph{Ground-truth dependence.}
PDE is ground-truth-free. It only requires the sequence of prompts. Therefore, it can be monitored in deployment without annotations.

\paragraph{Edge cases.}
If the true object moves substantially across slices, PDE may increase even when segmentation is correct. In the MitoEM setting, adjacent slices are structurally similar enough that large prompt drift is usually undesirable. For datasets with stronger object motion, PDE should be interpreted relative to an expected motion model or normalized by estimated object displacement.

\subsection{Distractibility Leakage Ratio}
\label{app:dlr-explanation}

\paragraph{Definition.}
Distractibility Leakage Ratio (DLR) measures how much attention leaks outside the target object compared with attention inside the target object:
\begin{equation}
  \mathrm{DLR}(F_t,G_t)
  = \min\left(
  \frac{\sum_x F_t(x)(1-G_t(x))}{\sum_x F_t(x)G_t(x)+\epsilon},
  \tau_{\mathrm{clamp}}
  \right),
\end{equation}
where $\tau_{\mathrm{clamp}}=10^4$. We report the logarithmic value,
\begin{equation}
  \log\mathrm{DLR}=\log\left(\mathrm{DLR}+\epsilon\right),
\end{equation}
for numerical stability and readability.

\paragraph{Intuition.}
DLR answers the question: \emph{is the decoder spending more attention outside the object than inside it?} If most attention lies inside the target, DLR is low. If attention is dominated by background or distractor structures, DLR is high.

\paragraph{Interpretation.}
DLR is useful for identifying distractibility. In EM images, many structures have similar texture and contrast, so a decoder may attend to nearby membranes, mitochondria, or background patterns. A high DLR indicates that the decoder's attention is not object-specific. However, DLR can be large when the object is very small, because the denominator $\sum_x F_t(x)G_t(x)$ can be tiny. This motivates both clamping and log scaling.

\paragraph{Direction.}
Lower is better. We report $\log$DLR with a $\downarrow$ arrow.

\paragraph{Ground-truth dependence.}
With ground truth, DLR measures attention leakage outside the true object. Without ground truth, using the predicted mask measures leakage outside the model's own prediction. The latter can detect inconsistency between attention and output, but it cannot detect drift if both attention and prediction have moved to the same wrong region.

\paragraph{Edge cases.}
DLR is sensitive to object size and mask downsampling. For very small objects, a small spatial misalignment between attention and mask can cause a large DLR. Therefore, DLR should usually be read together with CAMA and PICS rather than used alone.

\subsection{Causal Coupling Drop}
\label{app:ccd-explanation}

\paragraph{Definition.}
Causal Coupling Drop (CCD) estimates whether the attended image tokens are functionally important for the output mask. We first compute the baseline segmentation quality $\mathrm{IoU}_{\mathrm{base}}$. Then we identify the top-$k$ attended image tokens according to the head-averaged final token-to-image attention map, replace their image-token features with the per-token mean feature, and run the decoder again. Let the resulting segmentation quality be $\mathrm{IoU}_{\mathrm{ablated}}$. CCD is
\begin{equation}
  \mathrm{CCD}=\max\left(0,\mathrm{IoU}_{\mathrm{base}}-\mathrm{IoU}_{\mathrm{ablated}}\right).
\end{equation}
In our experiments, $k=5\%$ of image tokens.

\paragraph{Intuition.}
CCD answers the question: \emph{does the prediction actually depend on the regions that the decoder attends to?} CAMA and DLR are correlational: they compare attention with a mask. CCD is counterfactual: it asks what happens if the most-attended tokens are removed or neutralized.

\paragraph{Interpretation.}
High CCD means that ablating the most-attended tokens substantially damages the prediction. This indicates that the decoder is functionally coupled to those tokens. Low CCD means that the prediction changes little after ablation, suggesting that the attention map may not be causally important or that the decoder is relying on distributed redundant evidence.

\paragraph{Direction.}
Higher is generally better when the attention is also well aligned with the target. We report CCD with a $\uparrow$ arrow.

\paragraph{Ground-truth dependence.}
CCD uses an IoU comparison and therefore depends on a reference mask. With ground truth, it measures causal importance for true segmentation quality. Without ground truth, it can be computed against the original prediction as a consistency measure, but this deployment-time version should be interpreted more cautiously.

\paragraph{Why mean replacement rather than zeroing?}
Zeroing tokens can create an out-of-distribution feature pattern that the decoder never sees during normal inference. Mean replacement is a softer intervention: it removes token-specific information while preserving the approximate feature scale. This makes CCD a controlled counterfactual rather than a destructive corruption.

\paragraph{Edge cases.}
High CCD is not automatically good. If attention has drifted to a wrong object and the decoder relies on that wrong region, CCD can still be high. Therefore, CCD should be interpreted together with CAMA, DLR, and segmentation quality. The most desirable regime is high CCD plus high CAMA and low DLR.

\subsection{Prompt-Image Coupling Score}
\label{app:pics-explanation}

\paragraph{Definition.}
Prompt-Image Coupling Score (PICS) is a composite metric designed to summarize multiple aspects of decoder coupling in a single scalar:
\begin{equation}
  \mathrm{PICS}
  = \mathrm{CAMA}_{\mathrm{Dice}}
  \cdot \mathrm{TCS}
  \cdot (1-\mathrm{AAD})
  \cdot \left(1-\frac{\mathrm{DLR}}{\tau_{\mathrm{clamp}}}\right).
\end{equation}
All terms are oriented so that larger values indicate better coupling.

\paragraph{Intuition.}
PICS answers the question: \emph{is the decoder simultaneously aligned, stable, anchored, and non-distracted?} It is intentionally multiplicative: if any one component is poor, the final score decreases. This reflects the idea that reliable closed-loop segmentation requires several conditions at once. Attention should overlap the target, remain stable over time, avoid drifting away from the anchor, and avoid leaking heavily to distractors.

\paragraph{Interpretation.}
A high PICS indicates strong overall prompt-image coupling. A low PICS identifies a failure in at least one coupling dimension. Because PICS is multiplicative, it is conservative: it does not allow excellent performance on one component to fully compensate for failure on another. This makes it useful as a summary diagnostic for drift.

\paragraph{Direction.}
Higher is better. We report PICS with a $\uparrow$ arrow.

\paragraph{Ground-truth dependence.}
PICS inherits the ground-truth dependence of CAMA and DLR. With ground truth, it measures overall coupling to the true object. Without ground truth, it becomes a self-consistency score between attention and prediction, combined with ground-truth-free stability terms.

\paragraph{Edge cases.}
Because PICS includes CAMA, it can remain low even when TCS and AAD are excellent if attention is distributed outside the object interior. This behavior is intentional: PICS is designed to reward stable and target-specific coupling, not merely temporal smoothness.

\subsection{Attention Entropy}
\label{app:ae-explanation}

\paragraph{Definition.}
Attention entropy (AE) measures the spatial uncertainty of the decoder attention map:
\begin{equation}
  \mathrm{AE}(F_t)=-\sum_x F_t(x)\log(F_t(x)+\epsilon).
\end{equation}
For comparability across maps, entropy may also be normalized by $\log(H_fW_f)$.

\paragraph{Intuition.}
AE answers the question: \emph{is the decoder attention concentrated or diffuse?} Low entropy means that attention is concentrated on a small number of spatial tokens. High entropy means that attention is spread across many tokens.

\paragraph{Interpretation.}
Entropy has no universally correct direction. Concentrated attention can indicate confident object-specific coupling, but it can also indicate over-fixation on a wrong distractor. Diffuse attention can indicate uncertainty, but it can also reflect the legitimate use of broad contextual cues. In this paper, AE is used primarily as a descriptive diagnostic and as a correlation variable for segmentation quality.

\paragraph{Ground-truth dependence.}
AE is ground-truth-free because it only requires the attention map.

\subsection{Self--Cross Attention Correlation}
\label{app:sca-explanation}

\paragraph{Definition.}
Spearman self--cross correlation (SCA) measures rank correlation between decoder self-attention structure and token-to-image cross-attention structure. Let $S_t$ be the relevant mask-token row or aggregated mask-token interaction from the final decoder self-attention, and let $C_t$ be the corresponding flattened token-to-image cross-attention summary. After resizing or aggregating to comparable vectors, SCA is
\begin{equation}
  \mathrm{SCA}=\rho_{\mathrm{Spearman}}(S_t,C_t),
\end{equation}
where $\rho_{\mathrm{Spearman}}$ is the Spearman rank correlation.

\paragraph{Intuition.}
SCA answers the question: \emph{are the decoder's internal token interactions consistent with where it attends in the image?} A high correlation suggests that the decoder's token-level reasoning and image-level attention are coordinated. A low or negative correlation suggests that self-attention and cross-attention are decoupled.

\paragraph{Interpretation.}
SCA is useful as an internal consistency metric. It does not directly measure segmentation quality, but it can reveal whether different parts of the decoder are moving coherently during iterative prompting. In a stable feedback loop, one expects prompt-token interactions and image-token attention to remain coordinated.

\paragraph{Ground-truth dependence.}
SCA is ground-truth-free. It only requires cached decoder attention tensors.

\subsection{Summary of Metric Roles}
\label{app:metric-summary-table}

The decoder-coupling metrics play complementary diagnostic roles. CAMA-Dice and CAMA-IoU measure soft overlap between decoder attention and the object mask; without ground truth they become attention--prediction self-consistency checks. AAD measures long-term divergence from the initial anchor attention and is fully ground-truth-free. TCS measures similarity between consecutive attention maps and is also ground-truth-free. PDE measures centroid and box drift in prompt space and can be computed from prompts alone. $\log$DLR measures attention leakage outside the object; without ground truth it should be interpreted only as leakage outside the predicted mask. CCD measures the counterfactual importance of top-attended tokens and is strongest when evaluated against ground truth, though a prediction-consistency variant is possible. PICS combines alignment, stability, anchoring, and leakage into a conservative overall coupling score. AE describes whether attention is concentrated or diffuse, while SCA measures agreement between decoder self-attention and cross-attention structure; both AE and SCA are ground-truth-free internal diagnostics.

\subsection{How the Metrics Should Be Read Together}
\label{app:metric-joint-interpretation}

No single metric fully characterizes closed-loop decoder behavior. The metrics are designed to be read as a diagnostic panel:
\begin{itemize}[leftmargin=*,itemsep=2pt,topsep=2pt]
  \item \textbf{AAD + PDE} diagnose drift. If both increase, the prompt trajectory and decoder attention are moving away from the initial object anchor.
  \item \textbf{TCS} diagnoses temporal smoothness. If TCS decreases while AAD increases, the loop is not merely adapting; it is becoming dynamically unstable.
  \item \textbf{CAMA + DLR} diagnose spatial target specificity. High CAMA and low DLR indicate that attention is concentrated on the object rather than distractors.
  \item \textbf{CCD} tests whether attention is functionally important. It separates attention maps that are merely visually plausible from attention maps that the decoder actually uses.
  \item \textbf{PICS} provides a conservative overall coupling score. It is most useful for summarizing whether a method is simultaneously stable, anchored, aligned, and non-distracted.
\end{itemize}

The characteristic failure pattern of standard iterative prompting is high AAD, low TCS, high PDE, and reduced PICS. This pattern means that the decoder's internal attention is drifting, the prompt geometry is drifting, and the feedback loop is amplifying small errors over iterations. The characteristic behavior of \method{} is lower AAD, higher TCS, lower PDE, and higher PICS, indicating that proximal anchoring stabilizes the dynamic part of the loop while preserving enough flexibility for slice-to-slice adaptation.

\subsection{Why Decoder-Coupling Metrics Are Needed in Addition to IoU and Dice}
\label{app:why-decoder-metrics}

Standard segmentation metrics such as IoU, Dice, and HD95 evaluate only the final output mask. They do not explain \emph{why} an iterative foundation-model pipeline succeeds or fails. In closed-loop prompting, two methods can have similar IoU at an early iteration but very different internal stability. One method may remain coupled to the target and therefore continue to perform well; another may already show attention drift that only becomes visible in the output after several more iterations.

The decoder-coupling metrics address this gap. AAD and TCS expose internal attention dynamics before output failure becomes severe. PDE connects internal attention drift to external prompt drift. DLR and CAMA distinguish target-specific attention from distractor-driven attention. CCD tests whether attended regions causally affect the prediction. PICS combines these signals into a single conservative summary. Together, these metrics make it possible to analyze the iterative segmentation process as a feedback system rather than as a sequence of independent mask predictions.

\subsection{Aggregation and Statistical Reporting}
\label{app:metric-aggregation}

Metrics are computed per decoder call. A decoder call is indexed by method, slice, object, and iteration. For method comparisons, we use paired observations whenever possible: the same slice, object, and iteration are compared across methods. This pairing reduces variance due to object size, morphology, and slice-specific image quality.

For each metric, the main tables report the mean and standard deviation across paired decoder calls. Significance is assessed using two-sided paired permutation tests. The null hypothesis is that swapping method labels within each pair does not change the mean difference. This test is appropriate because it does not require normality of metric distributions, which is important for skewed quantities such as DLR and CCD.

When metrics are correlated with segmentation quality, we report both Pearson correlation and Spearman rank correlation. Pearson correlation measures linear association, while Spearman correlation measures monotonic association and is more robust to nonlinear scaling and outliers. The per-slice correlations show whether aggregate decoder behavior predicts aggregate segmentation quality. The per-call correlations test whether the same relationship holds at the level of individual decoder calls.

\subsection{Practical Deployment Guidance}
\label{app:metric-deployment-guidance}

At deployment time, ground-truth masks are unavailable. Therefore, the most practical online monitoring metrics are AAD, TCS, PDE, AE, and SCA. These can be computed directly from prompts and cached attention tensors. A practical monitoring pipeline can use the following rules:
\begin{itemize}[leftmargin=*,itemsep=2pt,topsep=2pt]
  \item If \textbf{AAD increases steadily}, the decoder is drifting away from its initial object anchor.
  \item If \textbf{TCS drops sharply}, the current prompt update may have caused an unstable attention jump.
  \item If \textbf{PDE increases sharply}, the prompt geometry may have moved too far from the object trajectory.
  \item If \textbf{AAD and PDE increase together}, the system should strengthen anchor regularization or trigger re-initialization.
  \item If \textbf{TCS is extremely high but segmentation stops adapting}, the system may be over-regularized and should reduce smoothness or anchor weight.
\end{itemize}

CAMA, DLR, CCD, and PICS can still be computed using the predicted mask in place of ground truth. In that setting, they should be treated as self-consistency checks rather than accuracy measures. For example, low predicted-mask CAMA indicates that the decoder attention and its own output disagree; however, high predicted-mask CAMA does not guarantee correctness because both attention and prediction may be wrong in the same way.


\end{document}